\documentclass[12pt]{article}
\usepackage{fullpage}

\usepackage{amsmath,amsfonts,amssymb,amsthm}

\usepackage{algorithm,algorithmic}
\usepackage{multirow}
\usepackage{cellspace}
\usepackage{setspace}
\usepackage{bm}
\usepackage{bbm}
\usepackage{graphicx}
\usepackage{mdframed}
\usepackage{subcaption}

\usepackage[round]{natbib}

\usepackage{hyperref}
\hypersetup{colorlinks,
            linkcolor=blue,
            citecolor=blue,
            urlcolor=magenta,
            linktocpage,
            plainpages=false}






\usepackage{geometry}                


\def\reals{{\mathcal R}}

\newcommand{\poly}{\mathop{\mbox{\rm poly}}}

\newcommand{\ignore}[1]{}

\def\reals{{\mathbb R}}

\def\bold0{\mathbf{0}}

\newtheorem{theorem}{Theorem}[section]
\newtheorem{lemma}[theorem]{Lemma}
\newtheorem{corollary}[theorem]{Corollary}


\newtheorem*{theorem*}{Theorem}
\newtheorem*{lemma*}{Lemma}
\newtheorem*{corollary*}{Corollary}
\newtheorem*{claim*}{Claim}
\newtheorem*{fact*}{Fact}
\newtheorem*{remark*}{Remark}

\theoremstyle{definition}

\theoremstyle{definition}


\def\1{\mathbf{1}}

\def\1{\mathbf{1}}

\def\balpha {\bm{\alpha}}
\def\bbeta {\bm{\beta}}
\def\btheta {\bm{\theta}}

\title{$k^*$-Nearest Neighbors: From Global to Local}%

\author{%
Oren Anava\footnote{The Voleon Group. Email: \texttt{oren@voleon.com}.}
\and
Kfir Y. Levy\footnote{Department of Computer Science, ETH Z\"urich. 
Email: \texttt{yehuda.levy@inf.ethz.ch}.}
}

%


\begin{document}
\maketitle

\begin{abstract}
The weighted $k$-nearest neighbors  algorithm is one of the most fundamental non-parametric methods in pattern recognition and machine learning. 
The question of setting the optimal number of neighbors as well as the optimal weights has received much attention throughout the years, nevertheless this problem seems to  have remained unsettled. In this paper we offer a simple approach to locally weighted regression/classification, where we make the bias-variance tradeoff explicit. 
Our formulation enables us to phrase a notion of optimal weights, and to efficiently find these weights as well as the optimal number of neighbors 
\emph{efficiently and adaptively, for each data point whose value we wish to estimate}.
The applicability of our approach is demonstrated on several datasets, showing superior performance over standard locally weighted methods.
\end{abstract}

\section{Introduction}
The $k$-nearest neighbors ($k$-NN) algorithm \cite{cover,hodges}, and Nadarays-Watson estimation \cite{nadaraya,watson} are the cornerstones of non-parametric learning.
Owing to their simplicity and flexibility, these procedures had become the methods of choice in many scenarios \cite{top10}, especially in settings where the underlying model is complex. Modern applications of the $k$-NN algorithm include recommendation systems \cite{recommend}, text categorization \cite{text},  heart disease classification \cite{heart}, and financial market prediction \cite{markets}, amongst others.

A successful application of the weighted $k$-NN algorithm requires a careful choice of three ingredients:  the number of nearest neighbors $k$, the weight vector $\balpha$, and the distance metric.  The latter requires domain knowledge and is thus henceforth assumed to be set and known in advance to the learner. Surprisingly, even under this assumption, the problem of choosing the optimal $k$ and $\balpha$ is not fully understood and has been studied extensively since the $1950$'s under many different regimes.  Most of the theoretic work  focuses on the asymptotic regime in which the number of  samples $n$ goes to infinity \cite{devroye2013probabilistic,samworth,stone}, and ignores the practical regime in which $n$ is finite. More importantly, the vast majority of  $k$-NN studies aim at finding an optimal value of $k$ per dataset, which seems to overlook the specific structure of the dataset and the properties of the data points whose labels we wish to estimate. 
While  kernel based methods such as Nadaraya-Watson enable an adaptive choice of the weight vector $\balpha$, theres still remains the question of how to choose the \emph{kernel's bandwidth} $\sigma$, which could be thought of as the parallel of the number of neighbors $k$ in $k$-NN. 
Moreover, there is no principled approach towards choosing the kernel function in practice.



In this paper we offer a coherent and  principled approach  to \emph{adaptively} choosing the number of neighbors $k$ and the corresponding weight vector $\balpha \in \reals^k$ per decision point. Given a new decision point, we aim to find the best locally weighted predictor, 
in the sense of minimizing the distance between our prediction and the ground truth.
In addition to yielding predictions, our approach enbles us to provide a \emph{per decision point} guarantee for the confidence of our predictions.
Fig.~\ref{figs} illustrates the importance of choosing $k$ adaptively.
In contrast to previous works on non-parametric regression/classification, we do not assume that the data $\{(x_i,y_i)\}_{i=1}^n$  arrives from some 
(unknown) underlying  distribution, but rather make a weaker assumption that  the labels $\{y_i\}_{i=1}^n$ are independent given the data points $\{x_i\}_{i=1}^n$,  allowing the latter to be chosen arbitrarily. 
Alongside providing a theoretical basis for our approach, we conduct an empirical study that demonstrates its superiority with respect to the state-of-the-art. 

This paper is organized as follows. 
In Section~\ref{sec:def} we introduce our setting and assumptions, and derive the locally  optimal prediction problem.
 In Section~\ref{sec:alg} we analyze the solution of the above prediction problem, and introduce a greedy algorithm designed to \emph{efficiently} find the 
 \emph{exact} solution. 
 Section~\ref{sec:Experiments} presents our experimental study, and Section~\ref{sec:Conclusion} concludes.

\begin{figure}
    \centering
    \begin{subfigure}[b]{0.3\textwidth}
    \begin{mdframed}	
        \includegraphics[width=\textwidth]{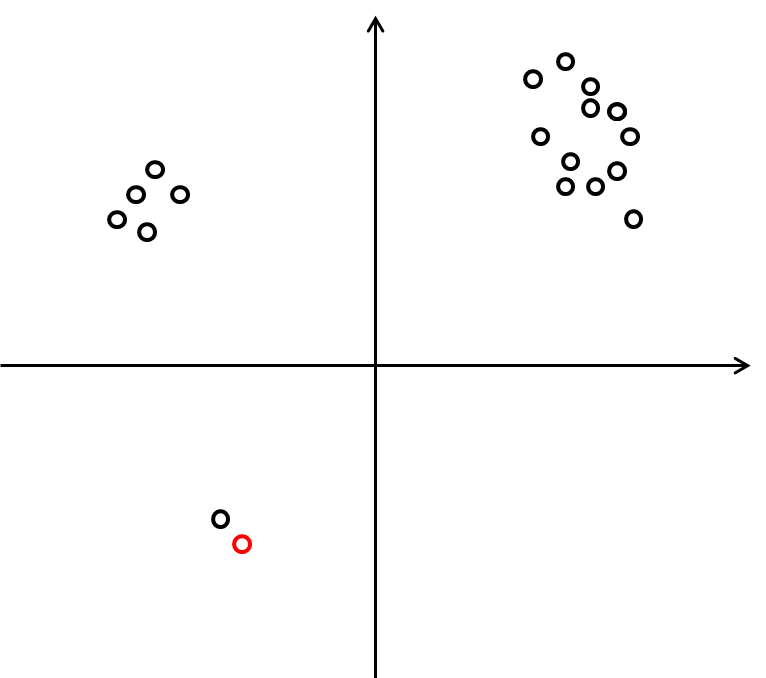}
    \end{mdframed}
        \caption{First scenario}
        \label{fig:gull}
    \end{subfigure}
    ~ 
    \begin{subfigure}[b]{0.3\textwidth}
        \begin{mdframed}	
        \includegraphics[width=\textwidth]{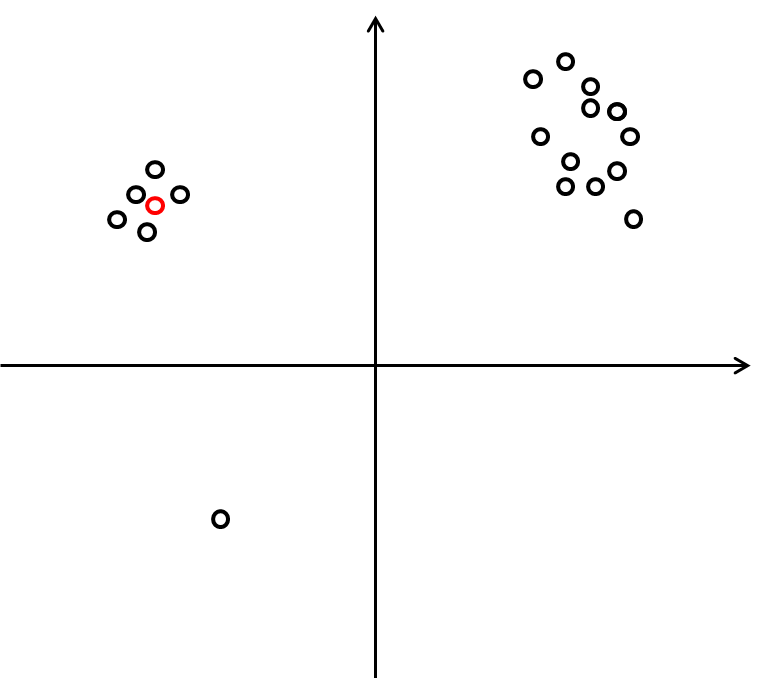}
      \end{mdframed}
      \caption{Second scenario}
        \label{fig:tiger}
    \end{subfigure}
    ~ 
    \begin{subfigure}[b]{0.3\textwidth}
    \begin{mdframed}	
        \includegraphics[width=\textwidth]{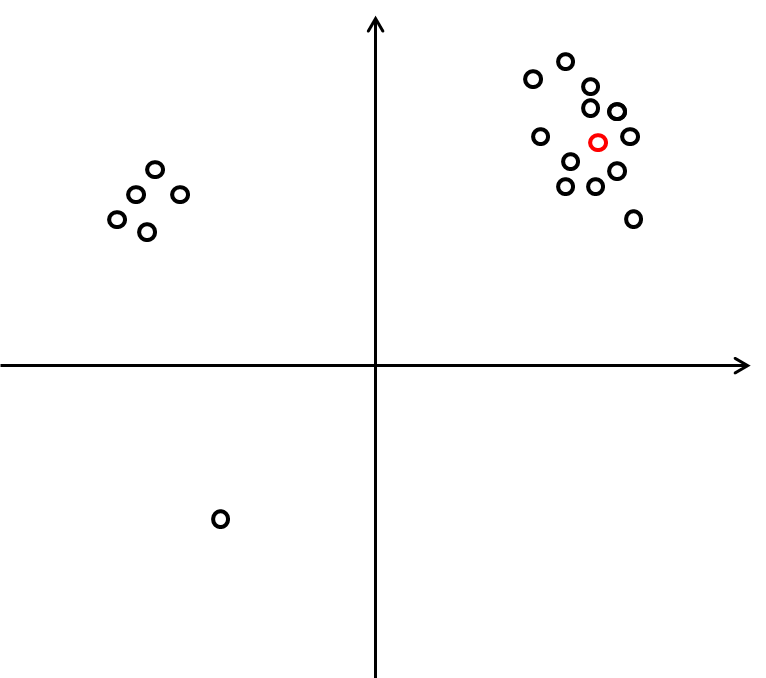}
    \end{mdframed}
        \caption{Third scenario}
        \label{fig:mouse}
    \end{subfigure}
    \caption{Three different scenarios. In all three scenarios, the same data points $x_1, \ldots , x_n \in \reals^2$ are given (represented by black dots). The red dot in each of the scenarios represents the new data point whose value we need to estimate. Intuitively, in the first scenario it would be beneficial to consider only the nearest neighbor for the estimation task, whereas in the other two scenarios we might profit by considering more neighbors.}\label{figs}
\end{figure}

\subsection{Related Work}
Asymptotic universal consistency is the most widely known theoretical guarantee for $k$-NN. This powerful guarantee implies that as the number of samples $n$ goes to infinity, and also $k\to \infty$,  $k/n\to 0$, then the risk of the $k$-NN rule converges to the risk of the Bayes classifier  for any underlying data distribution. Similar guarantees hold for weighted $k$-NN rules, with the additional assumptions that $\sum_{i=1}^k\alpha_i=1$ and $\max_{i\leq n}\alpha_i \to 0$, \cite{stone,devroye2013probabilistic}. In the regime of practical interest where the number of samples $n$ is finite, using $k=\lfloor \sqrt{n}\rfloor$ neighbors is a widely mentioned rule of thumb \cite{devroye2013probabilistic}.  Nevertheless,  this rule often yields poor results, and in the regime 
of finite samples  it is usually advised to choose $k$ using cross-validation.
Similar consistency results apply to kernel based local methods \cite{devroye1980distribution, gyorfi2006distribution}.
 
A novel study of  $k$-NN  by Samworth, \cite{samworth}, derives a closed form expression for the optimal weight vector, and  extracts the optimal number of neighbors.
However, this result is only optimal under several restrictive assumptions, and only holds for the asymptotic regime where  $n\to \infty$.
Furthermore, the above optimal number of neighbors/weights  do not adapt, but are rather fixed over all decision points  given the dataset.
In the context of kernel based methods, it is possible to  extract an expression for the optimal kernel's bandwidth $\sigma$ \cite{gyorfi2006distribution,fan1996local}. 
Nevertheless, this bandwidth is fixed over all decision points, and is only optimal under several restrictive assumptions.

There exist several heuristics to adaptively choosing the number of neighbors and weights separately for each decision point. In \cite{wettschereck1994locally,sun2010adaptive} it is suggested to use local cross-validation in order to adapt the value of $k$ to different decision points. 
Conversely, Ghosh \cite{ghosh} takes a  Bayesian approach towards choosing $k$ adaptively. Focusing on the multiclass classification setup, it is suggested in  \cite{baoli2004adaptive} to consider  different values of $k$ for each class,  choosing $k$ proportionally to the class populations. 
Similarly, there exist several attitudes towards adaptively choosing the kernel's bandwidth $\sigma$, for kernel based methods
\cite{abramson1982bandwidth,silverman1986density,demir2010adaptive,aljuhani2014modification}.

Learning the distance metric for $k$-NN was extensively studied throughout the last decade. There are several approaches towards metric learning, 
which roughly divide  into linear/non-linear  learning methods. It was found that metric learning may  significantly affect the performance of
 $k$-NN in numerous applications, including computer vision, text analysis, program analysis  and more. A comprehensive survey by Kulis \cite{metric}
provides a review of the metric learning literature. Throughout this work we assume that the distance metric is fixed, and thus the focus is on  finding the best (in a sense) values of $k$ and $\balpha$ for each new data point.


Two comprehensive monographs, \cite{devroye2013probabilistic} and \cite{devroye2015Lectures}, provide an extensive survey of the existing literature regarding $k$-NN rules, including theoretical guarantees, useful practices, limitations and more.

\section{Problem Definition} \label{sec:def}
In this section we present our setting and assumptions, and formulate the locally weighted optimal estimation problem.
Recall we seek to find the best local prediction in a sense of minimizing the distance between  this prediction and the ground truth. The problem at hand is thus defined as follows:
We are given $n$ data points $x_1, \ldots , x_n\in \reals^d$, and $n$ corresponding labels\footnote{Note that our analysis holds for both setups of classification/regression. For brevity we use a \emph{classification} task terminology, relating to  the  $y_i$'s as \emph{labels}.  Our analysis extends directly to the regression setup. }
$y_1, \ldots , y_n\in \reals $. Assume that for any $i \in \{1,\ldots,n\} = [n]$ it holds that   $y_i = f( x_i ) + \epsilon_i$, where $f(\cdot)$ and $\epsilon_i$ are such that:
\begin{enumerate}
\item[(1)] \textbf{$\mathbf{f(\cdot)}$ is a Lipschitz continuous function:} For any $x,y \in  \reals^d$ it holds that $ \left| f(x) - f(y) \right| \leq L \cdot  d (x,y) $, where the distance function $d(\cdot,\cdot)$ is set and known in advance. This assumption is rather standard when considering nearest neighbors-based algorithms, and is required in our analysis to bound the so-called \emph{bias} term (to be later defined). In the \emph{binary classification} setup
we assume that $f:\reals^d \mapsto [0,1]$, 
and that given $x$ its label $y\in\{0,1\}$ is distributed $ \text{Bernoulli}(f(x))$.
\item[(2)] \textbf{$\mathbf{\epsilon_i}$'s are noise terms:} For any $i \in [n]$ it holds that $\mathbb{E} \left[ \epsilon_i | x_i \right] = 0 $ and $ | \epsilon_i | \leq b$ for some given $b>0$. In addition, it is assumed that  given the data points $\{x_i\}_{i=1}^n$ then the noise terms  $\{\epsilon_i\}_{i=1}^n$ are independent.  This assumption is later used in our analysis to apply Hoeffding's inequality and bound the so-called \emph{variance} term (to be later defined). Alternatively, we could assume that $ \mathbb{E} \left[  \epsilon_i^2\vert x_i \right]  \leq b$  (instead of $ | \epsilon_i | \leq b$), and apply Bernstein inequalities. The results and  analysis remain qualitatively similar.
\end{enumerate}
Given a new data point $x_0$, our task is to estimate $f(x_0)$, where we restrict the estimator $\hat{f}(x_0)$  to be of the form  $ \hat{f}(x_0)= \sum_{i=1}^n \alpha_i y_i $. That is, the estimator is a weighted average of the given noisy labels. Formally, we aim at minimizing the absolute distance between our prediction and the ground truth $f(x_0)$, which translates 
into
$$
\min_{\balpha \in \Delta_n} \left| \sum_{i=1}^n \alpha_i y_i - f(x_0) \right| \qquad \mathbf{(P1)} ,
$$
where we minimize over the simplex, $\Delta_n = \{ \balpha \in \reals^n | \sum_{i=1}^n \alpha_i = 1 \text{ and } \alpha_i \geq 0,\;\forall i \}$.
Decomposing the objective of $\mathbf{(P1)}$ into a sum of bias and variance terms, we arrive at the following  relaxed objective:
\begin{align*}
\left| \sum_{i=1}^n \alpha_i y_i - f(x_0) \right| & = \left| \sum_{i=1}^n \alpha_i \left( y_i - f(x_i) + f(x_i) \right) - f(x_0) \right| \\
& =  \left| \sum_{i=1}^n \alpha_i \epsilon_i  + \sum_{i=1}^n \alpha_i \left( f(x_i) - f(x_0) \right) \right| \\ 
& \leq  \left| \sum_{i=1}^n \alpha_i \epsilon_i \right|  +  \left|  \sum_{i=1}^n \alpha_i \left( f(x_i) - f(x_0) \right) \right| \\ 
& \leq  \left| \sum_{i=1}^n \alpha_i \epsilon_i \right|  +  L \sum_{i=1}^n \alpha_i   d ( x_i , x_0 ) .
\end{align*}
By Hoeffding's inequality (see supplementary material) it follows that $\left| \sum_{i=1}^n \alpha_i \epsilon_i \right|  \leq C\| \balpha \|_2$  for $C = b \sqrt{2  \log \left( \frac{2}{\delta} \right) }$,  w.p. at least $1-\delta$. We thus arrive at a new optimization  problem $\mathbf{(P2)}$, such that solving it would yield a guarantee for $\mathbf{(P1)}$ with high probability:
$$
\min_{\balpha \in \Delta_n} C\| \balpha \|_2  +   L \sum_{i=1}^n \alpha_i   d ( x_i , x_0 ) \qquad \mathbf{(P2)}.
$$
The first term in $\mathbf{(P2)}$ corresponds to the noise in the labels and is therefore denoted as the \emph{variance} term, whereas the second term corresponds to the distance between $f(x_0)$ and $\{f(x_i)\}_{i=1}^n$ and is thus denoted as the  \emph{bias} term.

\section{Algorithm and Analysis} \label{sec:alg}
In this section we discuss the properties of the optimal solution for $\mathbf{(P2)}$, and present a greedy algorithm which is designed in order to efficiently find the exact solution of the latter objective (see Section~\ref{sec:algEfficeint}). Given a decision point $x_0$, Theorem~\ref{thm:Main} demonstrates that the optimal weight $\alpha_i$ of the data point $x_i$ is  proportional to $-d(x_i,x_0)$ (closer points are given more weight). Interestingly, this weight decay is quite slow compared to popular weight kernels, which utilize sharper decay schemes, e.g., exponential/inversely-proportional.  Theorem~\ref{thm:Main} also implies a cutoff effect, meaning that 
there exists  $k^*\in[n]$, such that only the $k^*$ nearest neighbors of $x_0$ donate to the prediction of its label. Note that both $\alpha$ and $k^*$ may adapt from one $x_0$ to another. Also notice that the optimal weights depend on a single parameter $L/C$, namely the Lipschitz to noise ratio. As $L/C$ grows $k^*$ tends to be smaller, which is quite intuitive.

Without loss of generality, assume that the points are ordered in ascending order according to their distance from $x_0$, i.e., 
$d(x_1,x_0)\leq d(x_2,x_0)\leq\ldots\leq d(x_n,x_0)$.
Also, let $\bbeta\in \reals^n$ be  such that $\beta_i = {L d(x_i,x_0)}/{C} $. Then, the following is our main theorem:
\begin{theorem}\label{thm:Main}
There exists $\lambda>0$ such that the optimal solution of $\mathbf{(P2)}$ is of the form  
\begin{align}\label{eq:alphaStar}
\alpha^*_i   =  \frac{\left( \lambda-\beta_i \right) \cdot \mathbf{1} \left\{  \beta_i<\lambda \right\}  }{\sum_{i=1}^n \left( \lambda-\beta_i \right) \cdot \mathbf{1} \left\{  \beta_i<\lambda \right\}  }  .
\end{align}
Furthermore, the value of $\mathbf{(P2)}$ at the optimum is $C\lambda$.
\end{theorem}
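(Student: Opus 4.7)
The plan is to attack $\mathbf{(P2)}$ directly via KKT, since the objective $C\|\balpha\|_2+L\sum_i\alpha_i d(x_i,x_0)$ is convex (the Euclidean norm is convex, the second term is linear), the feasible set $\Delta_n$ is convex, and Slater's condition holds (take $\alpha_i=1/n$). Hence KKT is both necessary and sufficient for optimality, and any KKT point will be the unique global minimizer up to the kernel of the objective. Note that $\|\cdot\|_2$ is non-differentiable only at the origin, which is not in $\Delta_n$, so smoothness of the objective is not an issue on the feasible set.

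First I would rescale, dividing the objective by $C$, and rewrite it using $\bbeta$ as $h(\balpha):=\|\balpha\|_2+\balpha^\top\bbeta$. Introduce a multiplier $\mu\in\reals$ for the equality constraint $\sum_i\alpha_i=1$ and multipliers $\eta_i\ge 0$ for the nonnegativity constraints $\alpha_i\ge 0$. The stationarity condition reads
\begin{equation*}
\frac{\alpha_i}{\|\balpha\|_2}+\beta_i-\mu-\eta_i=0,\qquad i=1,\dots,n,
\end{equation*}
together with complementary slackness $\eta_i\alpha_i=0$. If $\alpha_i>0$ then $\eta_i=0$, giving $\alpha_i=\|\balpha\|_2(\mu-\beta_i)$, which forces $\mu>\beta_i$. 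If $\alpha_i=0$ then $\eta_i=\beta_i-\mu\ge 0$, i.e.\ $\mu\le \beta_i$. The two cases combine into the compact form $\alpha_i=\|\balpha\|_2\,(\mu-\beta_i)_+$. Setting $\lambda:=\mu$ and summing against $\sum_i\alpha_i=1$ yields $\|\balpha\|_2=1/\sum_j(\lambda-\beta_j)_+$, which reproduces \eqref{eq:alphaStar}. Since at least one $\alpha_i$ must be positive and all $\beta_i\ge 0$, we automatically get $\lambda>0$. Existence of such a $\lambda$ follows because the map $\lambda\mapsto\sum_j(\lambda-\beta_j)_+^2$ is continuous and strictly increasing from $0$ to $\infty$ on $(\beta_1,\infty)$ (see the next paragraph).

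For the optimal value, I would exploit the same stationarity relation in two ways. Squaring $\alpha_i=\|\balpha^*\|_2(\lambda-\beta_i)_+$ and summing gives the self-consistency identity $\sum_i(\lambda-\beta_i)_+^2=1$; this is the very equation that pins down $\lambda$ uniquely. On the other hand,
\begin{equation*}
\lambda-\sum_i\alpha_i^*\beta_i=\sum_i\alpha_i^*(\lambda-\beta_i)=\|\balpha^*\|_2\sum_i(\lambda-\beta_i)_+^2=\|\balpha^*\|_2.
\end{equation*}
Therefore $\|\balpha^*\|_2+\sum_i\alpha_i^*\beta_i=\lambda$, and multiplying by $C$ gives the claimed optimal value $C\lambda$.

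I do not expect any serious obstacle: the work is essentially bookkeeping around KKT for a strictly convex problem on the simplex, plus the two-line identity that turns the self-consistency relation into the value formula. The only point that requires a sentence of care is justifying that $\balpha^*\ne\bold0$ on the feasible set (so that $\|\balpha\|_2$ is differentiable at the optimum) and that $\lambda$ exists and is unique; both follow from the monotone behavior of $\lambda\mapsto\sum_j(\lambda-\beta_j)_+^2$ and from $\balpha^*\in\Delta_n$ forcing $\|\balpha^*\|_2>0$.
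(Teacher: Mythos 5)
Your proposal is correct and follows essentially the same route as the paper: rescale by $C$, write the Lagrangian of $\mathbf{(P2)}$ over the simplex, use stationarity plus complementary slackness to get $\alpha_i^*=\|\balpha^*\|_2(\lambda-\beta_i)_+$, derive the self-consistency relation $\sum_i(\lambda-\beta_i)_+^2=1$, and use it to show the optimal value equals $C\lambda$ (your rearrangement $\lambda-\balpha^{*\top}\bbeta=\|\balpha^*\|_2$ is a slightly slicker version of the paper's direct substitution). The only nit is the passing claim of strict convexity --- the objective is convex but not strictly so (the paper itself notes it is not strongly convex) --- but nothing in your argument relies on it.
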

Following is a direct corollary of the above Theorem:
\begin{corollary}\label{cor:Main}
There exists $1\leq k^*\leq n$ such that for the optimal solution of $\mathbf{(P2)}$ the following applies:
\begin{align*} 
\alpha_i^* >0; \; \forall i\leq k^* \quad  \text{ and } \quad \alpha_i^* =0;\; \forall i> k^* .
\end{align*}
\end{corollary}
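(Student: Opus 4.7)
The plan is to read off the corollary directly from the closed form in Theorem~\ref{thm:Main}, using only the fact that the data points have been reindexed in order of increasing distance from $x_0$.

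First I would recall the standing assumption made just before the theorem: the points are labeled so that $d(x_1,x_0)\leq d(x_2,x_0)\leq\ldots\leq d(x_n,x_0)$, which implies the monotonicity $\beta_1\leq\beta_2\leq\ldots\leq\beta_n$ since $\beta_i = L\,d(x_i,x_0)/C$ and $L,C>0$. Next, I would apply Theorem~\ref{thm:Main}: for the $\lambda>0$ guaranteed by the theorem, $\alpha_i^*$ is strictly positive exactly when $\beta_i<\lambda$, and is zero otherwise. Monotonicity then forces the support $\{i:\beta_i<\lambda\}$ to be a prefix of $\{1,\ldots,n\}$; setting $k^*:=\max\{i:\beta_i<\lambda\}$ (or $k^*:=0$ if this set were empty) delivers both desired implications at once.

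To finish, I still need to rule out $k^*=0$ so that $1\leq k^*\leq n$. This is where a feasibility argument is needed: any optimal solution must lie in the simplex $\Delta_n$, so $\sum_{i=1}^n\alpha_i^*=1$, which is incompatible with all entries vanishing. Hence at least $\beta_1<\lambda$, giving $k^*\geq 1$. Equivalently, one can observe that the denominator in \eqref{eq:alphaStar} must be strictly positive for the expression to be well-defined, which again forces at least one index to satisfy $\beta_i<\lambda$, namely (by monotonicity) $i=1$.

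There is no real obstacle here; the only point requiring a brief comment is that the cutoff is genuinely a prefix rather than an arbitrary subset, and this is an immediate consequence of the pre-sorting of the data by distance to $x_0$. All the analytic content has already been absorbed into Theorem~\ref{thm:Main}.
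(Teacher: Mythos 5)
Your argument is correct and is exactly the intended one: the paper states the corollary as a direct consequence of Theorem~\ref{thm:Main} without further proof, relying precisely on the pre-sorting of the $\beta_i$ to make the support a prefix. Your additional observation that feasibility ($\sum_i\alpha_i^*=1$) rules out $k^*=0$ is a worthwhile detail the paper leaves implicit.
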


\begin{proof}[Proof of Theorem~\ref{thm:Main}]
Notice that $\mathbf{(P2)}$ may be written as follows:
$$
\min_{\balpha \in \Delta_n} C \left( \| \balpha \|_2  +   \balpha^\top \bbeta \right) \qquad \mathbf{(P2)}.
$$
We henceforth ignore the parameter $C$. In order to find the solution of $\mathbf{(P2)}$, let us first consider its Lagrangian:
$$
L(\balpha,\lambda,\btheta) =   \| \balpha \|_2  +   \balpha^\top \bbeta + \lambda \left( 1-\sum_{i=1}^n\alpha_i \right) - \sum_{i=1}^n \theta_i \alpha_i ,
$$
where $\lambda\in\reals$ is the multiplier of the equality constraint $\sum_i\alpha_i=1$, and $\theta_1,\ldots,\theta_n\geq 0$ are the multipliers of the inequality constraints $\alpha_i\geq 0,\; \forall i\in[n]$.
Since $\mathbf{(P2)}$ is convex,  any solution satisfying the KKT conditions is a global minimum.
Deriving the Lagrangian with respect to $\balpha$, we get that for any $i\in[n]$:
\begin{align*}
\frac{\alpha_i}{\| \balpha\|_2} = \lambda-\beta_i +\theta_i .
\end{align*}
Denote by $\balpha^*$ the optimal solution of $\mathbf{(P2)}$. By the KKT conditions, for any $\alpha^*_i>0$ it follows that $\theta_i=0$.
Otherwise, for any $i$ such that $\alpha^*_i=0$ it follows that 
$\theta_i\geq0$,  which implies $\lambda\leq\beta_i$.
Thus, for any nonzero weight $\alpha^*_i>0$ the following holds:
\begin{align}\label{eq:KKTNonz}
\frac{\alpha^*_i}{\| \balpha^*\|_2} = \lambda-\beta_i .
\end{align}
Squaring and summing Equation~\eqref{eq:KKTNonz} over all the nonzero entries of $\balpha$, we arrive at the following equation for $\lambda$:
\begin{align}\label{eq:LambdaEq}
1 = \sum_{\alpha^*_i>0}\frac{\left( \alpha^*_i \right) ^2}{\| \balpha^* \|_2^2} =\sum_{\alpha^*_i>0} (\lambda-\beta_i)^2 .
\end{align}


Next, we show that the value of the objective at the optimum is $C \lambda$. 
Indeed, note that by Equation \eqref{eq:KKTNonz} and the equality constraint $\sum_i\alpha^*_i=1$,  any  $\alpha^*_i>0$ satisfies
\begin{align}\label{eq:solution}
\alpha^*_i = \frac{\lambda-\beta_i}{A},\quad \text{ where }\quad A=\sum_{\alpha^*_i>0} (\lambda-\beta_i) .
\end{align}
Plugging the above into the objective of $\mathbf{(P2)}$  yields
\begin{align*}
 C \left( \| \balpha^* \|_2  +   \balpha^{*\top} \bbeta \right) &=\frac{C}{A}\sqrt{\sum_{\alpha^*_i>0}(\lambda-\beta_i)^2}+\frac{C}{A}\sum_{\alpha^*_i>0} (\lambda-\beta_i)(\beta_i-\lambda+\lambda)\\
& =\frac{C}{A}  -\frac{C}{A}\sum_{\alpha^*_i>0} (\lambda-\beta_i)^2+\frac{C\lambda}{A}\sum_{\alpha^*_i>0} (\lambda-\beta_i)\\
& = C \lambda ,
\end{align*}
where in the last equality we used Equation~\eqref{eq:LambdaEq}, and substituted $A = \sum_{\alpha^*_i>0}(\lambda-\beta_i)$.
\end{proof}

\subsection{Solving $\mathbf{(P2)}$ Efficiently}\label{sec:algEfficeint}
Note that  $\mathbf{(P2)}$ is a  convex optimization problem, 
and it can be therefore  (\emph{approximately}) solved efficiently, e.g., via any first order algorithm. Concretely, given an accuracy $\epsilon>0$, any off-the-shelf   convex 
optimization method would require a running time which is $\poly(n,\frac{1}{\epsilon})$
in order to find an $\epsilon$-optimal solution to  $\mathbf{(P2)}$\footnote{Note that $\mathbf{(P2)}$ is not strongly-convex, and therefore the polynomial dependence on $1/\epsilon$ rather than $\log(1/\epsilon)$ for first order methods. Other methods such as the Ellipsoid depend logarithmically on $1/\epsilon$, but suffer a worse dependence on $n$ compared to first order methods.}. Note that the calculation of (the unsorted) $\bbeta$ requires an additional computational cost of 
$O(nd)$.

Here we present an efficient method that computes the \emph{exact} solution of $\mathbf{(P2)}$. In addition to the 
$O(nd)$ cost for calculating $\bbeta$, our algorithm requires an $O(n\log n)$ cost for sorting the entries of $\bbeta$, as well as an additional running time of $O(k^*)$, where $k^*$ is the number of non-zero elements at the optimum. 
Thus, the running time of our method is independent of any accuracy $\epsilon$, and  may be significantly better compared to any off-the-shelf optimization method.
Note that in some cases \cite{indyk1998approximate}, using advanced data structures may decrease the cost of finding the nearest neighbors (i.e., the sorted $\bbeta$), yielding a running time  substantially smaller than $O(nd+n \log n)$.

Our method is depicted in  Algorithm~\ref{algorithm:KstarNN}. Quite intuitively, the core idea is to greedily add neighbors according to their distance form $x_0$ until a stopping condition is fulfilled (indicating that we have found the optimal solution).
Letting $\mathcal{C}_{\text{sortNN}}$, be the computational cost of calculating the sorted  vector $\bbeta$, the following theorem presents our guarantees.\\

\begin{algorithm}[t]
\caption{${k}^*$-NN }
\label{algorithm:KstarNN}
\begin{algorithmic}
\STATE \textbf{Input}: vector of ordered distances $\bbeta\in \reals^n$, noisy labels $y_1,\ldots,y_n \in \reals$
\STATE {Set}: $\lambda_0 =\beta_1+1$, $k=0$
\WHILE{$\lambda_k >\beta_{k+1}$ and $k\leq n-1$}
\STATE {Update:} $k\gets k+1$
\STATE {Calculate:}
$ \lambda_k = \frac{1}{k}\left( \sum_{i=1}^k \beta_i  +  \sqrt{ k  + \left( \sum_{i=1}^k \beta_i \right)^2 - k \sum_{i=1}^k \beta_i^2 } \right) $ 
\ENDWHILE
\STATE \textbf{Return}:  estimation $\hat{f}(x_0)=\sum_i \alpha_i y_i$, where   $\balpha\in \Delta_n$ is a weight vector  such
$
\alpha_i = \frac{\left( \lambda_k-\beta_i \right) \cdot \mathbf{1} \left\{  \beta_i<\lambda_k \right\}  }{\sum_{i=1}^n \left( \lambda_k-\beta_i \right) \cdot \mathbf{1} \left\{  \beta_i<\lambda_k \right\}  } $

%
\end{algorithmic}
\end{algorithm}

\begin{theorem}\label{thm:alg}
Algorithm~\ref{algorithm:KstarNN} finds the exact solution of $\mathbf{(P2)}$  within $k^*$ iterations, with an  $O(k^{*}+\mathcal{C}_{\text{sortNN}})$ running time.
\end{theorem}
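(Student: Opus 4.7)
The plan is to exploit Theorem~\ref{thm:Main} and Corollary~\ref{cor:Main}, which reduce the search for the optimal $\balpha^*$ to identifying two quantities: the cutoff $k^*$ and the multiplier $\lambda^*$ satisfying $\beta_{k^*} < \lambda^* \leq \beta_{k^*+1}$ together with the self-consistency equation $\sum_{i=1}^{k^*}(\lambda^* - \beta_i)^2 = 1$ from Equation~\eqref{eq:LambdaEq}. Because the $\beta_i$'s are sorted, the natural greedy strategy is to sweep $k=1,2,\ldots$, and for each candidate support $\{1,\ldots,k\}$ compute the tentative $\lambda_k$ forced by Equation~\eqref{eq:LambdaEq} restricted to that support, stopping as soon as $\lambda_k \leq \beta_{k+1}$. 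I would split the proof into three parts: (a) derive the closed form for $\lambda_k$; (b) prove the stopping rule correctly identifies $k^*$; (c) count the per-iteration work.

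Part (a) is a short algebraic step: substituting the candidate support into Equation~\eqref{eq:LambdaEq} yields the quadratic $k\lambda^2 - 2(\sum_{i=1}^k\beta_i)\lambda + (\sum_{i=1}^k\beta_i^2 - 1) = 0$, and taking the larger root (the one required for positivity of the weights) reproduces exactly the expression used in Algorithm~\ref{algorithm:KstarNN}. The heart of part~(b) is the monotonicity claim that $\lambda_k \geq \lambda^*$ for every $k \leq k^*$. I would prove it by viewing $h_k(\lambda) := \sum_{i=1}^k(\lambda-\beta_i)^2 - 1$ as an upward-opening parabola whose larger root is $\lambda_k$; evaluating at $\lambda^*$ gives $h_k(\lambda^*) \leq \sum_{i=1}^{k^*}(\lambda^* - \beta_i)^2 - 1 = 0$ (we discarded only nonnegative tail terms), so $\lambda^*$ lies between the two roots of $h_k$, whence $\lambda^* \leq \lambda_k$. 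Two consequences then drop out: first, for any $k<k^*$, $\lambda_k \geq \lambda^* > \beta_{k^*} \geq \beta_{k+1}$, so the while-loop continues strictly past $k$; second, at $k=k^*$ the quadratic coincides with Equation~\eqref{eq:LambdaEq}, forcing $\lambda_{k^*}=\lambda^*$, and since $\lambda^* \leq \beta_{k^*+1}$ the loop terminates with the correct multiplier. Combined with $\lambda_{k^*} > \beta_{k^*}$ inherited from $\lambda^*$, the returned $\balpha$ matches exactly the form in Equation~\eqref{eq:alphaStar}, hence equals $\balpha^*$.

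For part~(c), maintaining running sums $S_k=\sum_{i=1}^k\beta_i$ and $Q_k=\sum_{i=1}^k\beta_i^2$ makes each update and each evaluation of the closed-form $\lambda_k$ cost $O(1)$; combined with the upfront $\mathcal{C}_{\text{sortNN}}$ cost to produce the sorted $\bbeta$ and termination at iteration $k^*$ by part~(b), the total running time is $O(k^*+\mathcal{C}_{\text{sortNN}})$ as claimed. I expect the main obstacle to be the monotonicity step in (b): the entire algorithm hinges on the one-line comparison between $h_k(\lambda^*)$ and $h_{k^*}(\lambda^*)$, and once this inequality is in hand, the remaining assertions (stopping at the right iteration, returning the correct weights, accounting for time) follow with minimal additional work.
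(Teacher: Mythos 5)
Your proposal is correct, and it reaches the two nontrivial conditions of the paper's proof --- that $\lambda_k$ is well defined (real) for $k\leq k^*$ and that the loop does not halt before $k^*$ --- by a genuinely different route. The paper establishes these via Lemma~\ref{lem:lambda_kOpt}, which interprets each intermediate $\lambda_k$ as the optimal value of the restricted problem $\min_{\balpha\in\Delta_n^{(k)}}\bigl(\|\balpha\|_2+\balpha^\top\bbeta\bigr)$; reality of $\lambda_k$ then follows from feasibility of that program, and non-early-termination from the monotonicity $\lambda_k\geq\lambda_{k+1}$ induced by the nested constraint sets, closed off by a contradiction argument. You instead compare the quadratics directly: since $h_k(\lambda^*)=\sum_{i=1}^{k}(\lambda^*-\beta_i)^2-1\leq h_{k^*}(\lambda^*)=0$ (only nonnegative tail terms are dropped), the upward parabola $h_k$ has real roots and $\lambda^*$ lies weakly below its larger root $\lambda_k$, giving both conditions in one stroke via $\lambda_k\geq\lambda^*>\beta_{k^*}\geq\beta_{k+1}$. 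Your argument is more elementary and self-contained --- it bypasses the appendix lemma and its re-derivation of the KKT system for the restricted problem entirely --- while the paper's route buys the conceptually informative fact that the $\lambda_k$'s form a decreasing sequence of optimal values of progressively relaxed programs. The one point worth making explicit in your write-up is that $h_k(\lambda^*)\leq 0$ is precisely what guarantees a nonnegative discriminant, so that the square root in the algorithm's formula for $\lambda_k$ is well defined at every iteration reached; this is currently only implicit in your phrase ``lies between the two roots.'' The closed-form derivation in part (a), the identification of $\lambda^*$ with the larger root via positivity of the weights, the halting argument at $k^*$ from $\alpha^*_{k^*+1}=0\Rightarrow\lambda^*\leq\beta_{k^*+1}$, and the $O(1)$-per-iteration bookkeeping of the running sums all match the paper.
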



\begin{proof}[Proof of Theorem~\ref{thm:alg}]
Denote by $\balpha^*$ the optimal solution of $\mathbf{(P2)}$, and by $k^*$ the corresponding number of nonzero weights. 
By Corollary~\ref{cor:Main}, these $k^*$ nonzero weights correspond to the $k^*$ smallest values of $\bbeta$. Thus, we are left to show that (1) the optimal $\lambda$ is of the form calculated by the algorithm; and (2)  the algorithm halts after exactly $k^*$ iterations and outputs the optimal solution.

Let us first find the optimal $\lambda$. Since the non-zero elements of the optimal solution correspond to the $k^*$ smallest values of $\bbeta$, then Equation~\eqref{eq:LambdaEq} is equivalent to the following quadratic equation in $\lambda$:
\begin{align*}
k^*\lambda^2 - 2\lambda\sum_{i=1}^{k^*}\beta_i + \left( \sum_{i=1}^{k^*}\beta_i^2-1 \right) =0 .
\end{align*}
Solving for $\lambda$ and neglecting the solution that does not agree with $\alpha_i\geq 0,\;\forall i\in[n]$, we get
\begin{align}\label{eq:lambda}
\lambda = \frac{1}{k^*}\left( \sum_{i=1}^{k^*} \beta_i  +  \sqrt{ k^*  + \left( \sum_{i=1}^{k^*} \beta_i \right)^2 - k^* \sum_{i=1}^{k^*} \beta_i^2 } \right)~.
\end{align}
 The above implies that given $k^*$, the optimal solution (satisfying  KKT) can be directly derived by a calculation of $\lambda$ according to Equation~\eqref{eq:lambda} and computing the $\alpha_i$'s according to Equation~\eqref{eq:alphaStar}.
Since Algorithm~\ref{algorithm:KstarNN} calculates $\lambda$ and $\balpha$ in the form appearing in Equations~\eqref{eq:lambda} and \eqref{eq:alphaStar} respectively,  it is therefore sufficient to show that it halts after exactly $k^*$ iterations in order to prove its optimality.
The latter is a direct consequence of the following conditions:
\begin{enumerate}
\item[(1)] Upon  reaching iteration $k^*$ Algorithm~\ref{algorithm:KstarNN} necessarily halts.
\item[(2)] For any $k\leq k^*$ it holds that $\lambda_k \in \reals$.
\item[(3)] For any $k<k^*$ Algorithm~\ref{algorithm:KstarNN} does not halt.
\end{enumerate}
Note that the first condition together with the second condition imply that  $\lambda_k$ is well defined  until the algorithm halts  
(in the sense that the $``>"$operation in the \textbf{while} condition is meaningful). The first condition together with the third condition imply that the algorithm halts after exactly $k^*$ iterations, which concludes the proof.
We are now left to show that the above three conditions hold:

\textbf{Condition (1):} Note that upon reaching $k^*$, Algorithm~\ref{algorithm:KstarNN} necessarily calculates the optimal $\lambda=\lambda_{k^*}$. Moreover, the entries of $\balpha^*$ whose indices are greater than $k^*$ are necessarily zero,  and in particular,  $\alpha_{k^*+1}^*=0$. By Equation~\eqref{eq:alphaStar}, this implies that $\lambda_{k^*}\leq \beta_{k^*+1}$, and therefore the algorithm halts upon reaching $k^*$.

In order to establish conditions (2) and (3) we require the following lemma:
\begin{lemma} \label{lem:lambda_kOpt}
Let $\lambda_k$ be as calculated by  Algorithm~\ref{algorithm:KstarNN} at iteration $k$. Then, for any $k\leq k^*$ the following holds:
\begin{align*}
\lambda_k = \min_{\balpha\in \Delta_n^{(k)} }\left( \| \balpha \|_2  +   \balpha^\top \bbeta\right),\; 
\text{ where }  \Delta_n^{(k)} = \{ \balpha\in \Delta_n : \alpha_i = 0,\;  \forall i>k\}  
\end{align*}
\end{lemma}
The proof of Lemma~\ref{lem:lambda_kOpt} appears in Appendix~\ref{sec:Proof_lem:lambda_kOpt}.
We are now ready to prove the remaining conditions. 

\textbf{Condition (2):} Lemma~\ref{lem:lambda_kOpt} states that $\lambda_k$ is the solution of a convex program over a nonempty set, 
therefore $\lambda_k\in\reals$.

\textbf{Condition (3):} By definition $\Delta_{n}^{(k)}\subset \Delta_n^{(k+1)}$ for any $k < n$. Therefore, Lemma~\ref{lem:lambda_kOpt} implies that
$\lambda_{k}\geq \lambda_{k+1}$ for any $k<k^*$ (minimizing the same objective with stricter constraints yields a higher optimal value).
Now assume by contradiction that Algorithm~\ref{algorithm:KstarNN} halts at some $k_0<k^*$, then the stopping condition of the algorithm implies that
$\lambda_{k_0}\leq \beta_{k_0+1}$. Combining the latter with $\lambda_{k} \geq \lambda_{k+1},\; \forall k\leq k^*$, and using $\beta_k\leq \beta_{k+1},\; \forall k\leq n$, we conclude that:
\begin{align*}
\lambda_{k^*}\leq \lambda_{k_{0}+1}\leq \lambda_{k_0}\leq \beta_{k_0+1}\leq \beta_{k^*}~.
\end{align*}
The above implies that $\alpha_{k^*}=0$ (see Equation~\eqref{eq:alphaStar}), which contradicts Corollary~\ref{cor:Main} and the definition of $k^*$.
\paragraph{Running time:} Note that the main running time burden of Algorithm~\ref{algorithm:KstarNN} is the calculation of $\lambda_k$ for any
$k\leq k^*$. A naive calculation of $\lambda_k$ requires an $O(k)$ running time.  However, note that $\lambda_k$  depends only on 
$\sum_{i=1}^k\beta_i$ and $\sum_{i=1}^k \beta_i^2$. Updating these sums incrementally implies that we require only $O(1)$ running time per iteration, yielding a  total running time of $O(k^*)$. The remaining $O(\mathcal{C}_{\text{sortNN}})$ running time is required in order to calculate the (sorted) $\bbeta$.
\end{proof}

\subsection{Special Cases}
The aim of this section is to discuss two special cases in which the bound of our algorithm coincides with familiar bounds in the literature, thus justifying the relaxed objective of $\mathbf{(P2)}$. We present here only a high-level description of both cases, and defer the formal details to the full version of the paper.

The solution of $\mathbf{(P2)}$ is a high probability upper-bound on the true prediction error $ \left| \sum_{i=1}^n \alpha_i y_i - f(x_0) \right| $.   Two interesting cases to consider in this context are $\beta_i = 0$  for all $i \in [n] $, and $\beta_1 = \ldots = \beta_n = \beta > 0$. In the first case, our algorithm includes all labels in the computation of $\lambda$, thus yielding a confidence bound of  $2 C \lambda = 2 b  \sqrt{ (2 / n)  \log \left( 2 / \delta \right) }$ for the prediction error (with probability $1-\delta$). Not surprisingly, this bound coincides with the standard Hoeffding bound for the task of estimating the mean value of a given distribution based on noisy observations drawn from this distribution.  Since the latter is known to be tight (in general), so is the confidence bound obtained by our algorithm. In the second case as well, our algorithm will use all data points to arrive at the confidence bound $2 C \lambda = 2 L d + 2 b  \sqrt{ (2 / n)  \log \left( 2 / \delta \right) }$,  where we denote $d(x_1,x_0)= \ldots = d(x_n,x_0) = d$. The second term is again tight by concentration arguments, whereas the first term cannot be improved due to Lipschitz property of $f(\cdot)$, thus yielding an overall tight confidence bound for our prediction in this case.

\section{Experimental Results}\label{sec:Experiments} 
The following experiments demonstrate the effectiveness of the proposed algorithm on several datasets. We start by presenting the baselines used for the comparison. 

\subsection{Baselines}

\paragraph{The standard $\mathbf{k}$-NN:} Given $k$, the standard ${k}$-NN finds the $k$ nearest data points to $x_0$ (assume without loss of generality that these data points are $x_1,\ldots,x_k$), and then 
estimates $\hat{f}(x_0) = \frac{1}{k} \sum_{i=1}^k y_i $.
 
\paragraph{The Nadaraya-Watson estimator:} This estimator assigns the data points with weights that are proportional to some given similarity kernel $K:\reals^d \times \reals^d \mapsto \reals_{+}$. That is,
\begin{align*} 
\hat{f}(x_0) =\frac{ \sum_{i=1}^n K(x_i,x_0) y_i}{\sum_{i=1}^n K(x_i,x_0)} .
\end{align*}
Popular choices of kernel functions include the Gaussian kernel $K(x_i,x_j) = \frac{1}{\sigma} e^{-\frac{\|x_i-x_j \|^2}{2\sigma^2}}$; Epanechnikov Kernel $K(x_i,x_j) = \frac{3}{4} \left(1-\frac{\|x_i-x_j \|^2}{\sigma^2}\right)\1_{\left\{\|x_i-x_j \|\leq \sigma \right\}}$; and the triangular kernel $K(x_i,x_j) =  \left(1-\frac{\|x_i-x_j \|}{\sigma}\right)\1_{\left\{\|x_i-x_j \|\leq \sigma \right\}}$.  Due to lack of space, we present here only the best performing kernel function among the three listed above (on the tested datasets), which is the Gaussian kernel. 

\subsection{Datasets}
In our experiments we use 8 real-world datasets, all are available in the UCI repository website (\url{https://archive.ics.uci.edu/ml/}). In each of the datasets, the features vector consists of real values only, whereas the labels take  different forms: in the first 6 datasets (QSAR, Diabetes, PopFailures, Sonar, Ionosphere, and Fertility), the labels are binary $y_i \in \{0,1\}$. In the last two datasets (Slump and Yacht), the labels are real-valued. Note that our algorithm (as well as the other two baselines) applies to all datasets without requiring any adjustment. The number of samples $n$ and the dimension of each sample $d$ are given in Table \ref{t1} for each  dataset.

\begin{table*} [tb]
\begin{center}
\setlength{\tabcolsep}{4pt}
\resizebox{\columnwidth}{!}{
\begin{tabular}{c|c|c|c|c|c|c|} 
 \cline{2-7} 
& \multicolumn{2}{ |c| }{\small{\textbf{Standard $\mathbf{k}$-NN}}} & \multicolumn{2}{ |c| }{\small{\textbf{Nadarays-Watson}}} & \multicolumn{2}{ |c| }{\small{\textbf{Our algorithm ($\mathbf{k^*}$-NN)}}} \\ \cline{2-7} 
\hline \multicolumn{1}{ |c| }{ \small{\textbf{Dataset ($\mathbf{n,d}$)}} } & \small{\textbf{Error (STD)}} & \small{\textbf{Value of $\mathbf{k}$}} & \small{\textbf{Error (STD)}} & \small{\textbf{Value of $\mathbf{\sigma}$}} & \small{\textbf{Error (STD)}} & \small{\textbf{Range of $\mathbf{k}$}} \\ \cline{1-7} 
\multicolumn{1}{ |c| }{\small{\textbf{QSAR (1055,41)}}} & \small{  0.2467 }  \small{(0.3445)} & \small{ 2 } & \small{{  0.2303 }}  \small{(0.3500)} & \small{0.1} & \small{\textbf{  0.2105*}}  \small{\textbf{(0.3935)}} & \small{1-4} \\ \cline{1-7} 
\multicolumn{1}{ |c| }{\small{\textbf{Diabetes (1151,19)}}} & \small{  0.3809 }  \small{(0.2939)} & \small{ 4 } & \small{{  0.3675 }}  \small{(0.3983)} & \small{0.1} & \small{\textbf{  0.3666  }}  \small{\textbf{(0.3897)}} & \small{1-9} \\ \cline{1-7} 
\multicolumn{1}{ |c| }{\small{\textbf{PopFailures (360,18)}}} & \small{  0.1333 }  \small{(0.2924)} & \small{ 2 } & \small{\textbf{  0.1155 }}  \small{\textbf{(0.2900)}} & \small{0.01} & \small{{ 0.1218 }}  \small{{(0.2302)}} & \small{2-24} \\ \cline{1-7} 
\multicolumn{1}{ |c| }{\small{\textbf{Sonar (208,60)}}} & \small{  0.1731 }  \small{(0.3801)} & \small{ 1 } & \small{{  0.1711 }}  \small{(0.3747)} & \small{0.1} & \small{\textbf{  0.1636  }}  \small{\textbf{(0.3661)}} & \small{1-2} \\ \cline{1-7} 
\multicolumn{1}{ |c| }{\small{\textbf{Ionosphere (351,34)}}} & \small{  0.1257 }  \small{(0.3055)} & \small{ 2 } & \small{{  0.1191 }}  \small{(0.2937)} & \small{0.5} & \small{\textbf{  0.1113*}}  \small{\textbf{(0.3008)}} & \small{1-4} \\ \cline{1-7} 
\multicolumn{1}{ |c| }{\small{\textbf{Fertility (100,9)}}} & \small{  0.1900 }  \small{(0.3881)} & \small{ 1 } & \small{{  0.1884 }}  \small{(0.3787)} & \small{0.1} & \small{\textbf{  0.1760}}  \small{\textbf{(0.3094)}} & \small{1-5} \\ \cline{1-7} 
\multicolumn{1}{ |c| }{\small{\textbf{Slump (103,9)}}} & \small{  3.4944 }  \small{(3.3042)} & \small{4} & \small{{  2.9154 }}  \small{(2.8930)} & \small{0.05} & \small{\textbf{  2.8057  }}  \small{\textbf{(2.7886)}} & \small{1-4} \\ \cline{1-7} 
\multicolumn{1}{ |c| }{\small{\textbf{Yacht (308,6)}}} & \small{  6.4643 }  \small{(10.2463)} & \small{2} & \small{{  5.2577 }}  \small{(8.7051)} & \small{0.05} & \small{\textbf{  5.0418*}}  \small{\textbf{(8.6502)}} & \small{1-3} \\ \cline{1-7} 
\end{tabular}
}
\caption{{Experimental results. The values of $k$, $\sigma$ and $L/C$ are determined via $5$-fold cross validation on the validation set. These value are then used on the test set to generate the (absolute) error rates presented in the table. In each line, the best result is marked with bold font, where asterisk indicates significance level of $0.05$ over the second best result. 
} \label{t1}}
\end{center}
\end{table*}

\subsection{Experimental Setup}
We randomly divide each  dataset into two halves (one used for validation and the other for test). On the first half (the validation set), we run the two baselines and our algorithm with different values of $k$, $\sigma$ and $L/C$ (respectively), using $5$-fold cross validation. Specifically, we consider values of $k$ in  $\{1,2,\ldots,10\}$ and values of $\sigma$ and $L/C$ in $\{ 0.001 , 0.005 , 0.01 , 0.05 , 0.1 , 0.5 , 1 , 5 , 10\}$. The best values of $k$, $\sigma$ and $L/C$ are then used in the second half  of the dataset (the test set) to obtain the results presented in Table \ref{t1}. For our algorithm, the range of $k$ that corresponds to the selection of $L/C$ is also given.  Notice that we present here the average absolute error of our prediction, as a consequence of our theoretical guarantees. 

\subsection{Results and Discussion}
As evidenced by Table \ref{t1}, our algorithm outperforms the baselines on $7$ (out of $8$) datasets, where on $3$ datasets the outperformance is significant. It can also be seen that whereas the standard $k$-NN is restricted to choose one value of $k$ per dataset, our algorithm fully utilizes the ability to choose $k$ adaptively per data point. This validates our theoretical findings, and highlights the advantage of adaptive selection of $k$.

\section{Conclusions and Future Directions} \label{sec:Conclusion}
We have introduced a principled approach to locally weighted optimal estimation. 
By explicitly phrasing the bias-variance tradeoff, we defined the notion of optimal weights and optimal number of neighbors per decision point,  
and consequently devised an efficient method to extract them.
Note that our approach could be extended to handle multiclass  classification, as well as scenarios in which predictions of different data points correlate (and we have an estimate of their correlations). Due to lack of space we leave these extensions to the full version of the paper.


A shortcoming of current non-parametric methods, including our $k^*$-NN algorithm, is their limited geometrical perspective.
Concretely, all of these methods only  consider the distances between the decision point and dataset points, i.e., $\{ d(x_0,x_i)\}_{i=1}^n$, and \emph{ignore} the geometrical relation between the dataset points, i.e., $\{ d(x_i,x_j)\}_{i,j=1}^n$. We believe that our approach opens an avenue for taking advantage of this additional geometrical information, which may have a great affect over the quality of our predictions.

\bibliographystyle{abbrvnat}
\bibliography{bib}

\newpage

\appendix
\section{Hoeffding's Inequality}
\begin{theorem*}[Hoeffding] 
Let $\{ \epsilon_i \}_{i=1}^n \in [L_i,U_i]^n$ be a sequence of independent random variables, such that  $\mathbb{E} \left[ \epsilon_i \right] = \mu_i$. Then, it holds that
$$
\mathbb{P} \left( \left| \sum_{i=1}^n \epsilon_i - \sum_{i=1}^n \mu_i \right| \geq \varepsilon \right) \leq 2e^{-\frac{2 \varepsilon^2}{\sum_{i=1}^n (U_i - L_i)^2} } .
$$
\end{theorem*}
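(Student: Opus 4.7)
My plan is the classical Chernoff-method proof, combined with Hoeffding's lemma on exponential moments of bounded zero-mean variables.

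First I would recenter: set $Z_i = \epsilon_i - \mu_i$, so the $Z_i$'s are independent, zero-mean, and each bounded in an interval of length $U_i - L_i$. Since $\{|\sum_i Z_i| \geq \varepsilon\}$ is the union of $\{\sum_i Z_i \geq \varepsilon\}$ and $\{-\sum_i Z_i \geq \varepsilon\}$, I would bound the first tail and then apply the same argument to $\{-Z_i\}$ for the second tail, picking up the factor of $2$ via a union bound.

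Second, for the one-sided tail, I would apply the exponential Markov (Chernoff) trick: for any $t > 0$,
$$\mathbb{P}\bigl(\textstyle\sum_i Z_i \geq \varepsilon\bigr) \;\leq\; e^{-t\varepsilon}\,\mathbb{E}\bigl[e^{t\sum_i Z_i}\bigr] \;=\; e^{-t\varepsilon}\prod_i \mathbb{E}[e^{tZ_i}],$$
where the factorization uses independence. The heart of the proof is then Hoeffding's lemma: if $Z$ is zero-mean and $Z \in [a,b]$ almost surely, then $\mathbb{E}[e^{tZ}] \leq \exp(t^2(b-a)^2/8)$. Given this lemma, each factor contributes $\exp(t^2(U_i - L_i)^2/8)$, so the right-hand side becomes $\exp(-t\varepsilon + \tfrac{t^2}{8}\sum_i (U_i - L_i)^2)$. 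A one-variable minimization in $t > 0$ yields the optimum at $t^* = 4\varepsilon/\sum_i(U_i - L_i)^2$ and the stated exponent $-2\varepsilon^2/\sum_i(U_i - L_i)^2$.

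The main obstacle is Hoeffding's lemma itself. My plan for it is to use convexity of $z \mapsto e^{tz}$ on $[a,b]$ to dominate $e^{tZ}$ by the secant line joining $(a, e^{ta})$ and $(b, e^{tb})$; taking expectations (and killing the linear term via $\mathbb{E}[Z] = 0$) reduces the bound to analyzing the scalar function $\psi(u) := -pu + \log(1 - p + p e^u)$ with $p = -a/(b-a) \in [0,1]$ and $u = t(b-a)$. I would verify $\psi(0) = \psi'(0) = 0$ and show $\psi''(u) \leq 1/4$ uniformly in $u$ (this is exactly the statement that a Bernoulli variable has variance at most $1/4$), after which a second-order Taylor expansion gives $\psi(u) \leq u^2/8$, as needed. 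This second-derivative estimate is the only subtle step; everything else is a straightforward chain of Markov's inequality, independence, and the univariate optimization.
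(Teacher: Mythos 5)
The paper states this Hoeffding inequality in the appendix as a standard known result and gives no proof of its own, so there is nothing to compare against. Your proposal is the classical and correct argument: recentering plus a union bound for the two-sided tail, the Chernoff exponential-moment bound factored by independence, Hoeffding's lemma proved via convexity and the estimate $\psi''(u)=s(1-s)\leq 1/4$, and the optimization $t^*=4\varepsilon/\sum_i(U_i-L_i)^2$ yielding exactly the stated exponent $-2\varepsilon^2/\sum_i(U_i-L_i)^2$; all steps check out.
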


\section{Proof of Lemma~\ref{lem:lambda_kOpt}} \label{sec:Proof_lem:lambda_kOpt}
\begin{proof}
First note that for $k=k^*$ the lemma holds immediately by Theorem~\ref{thm:Main}. In what follows, we establish the lemma for $k<k^*$.
Thus, set $k$, let $\Delta_n^{(k)} = \{ \balpha\in \Delta_n : \alpha_i = 0,\;  \forall i>k\}$, and consider the following optimization problem:
\begin{align*}
 \min_{\balpha\in \Delta_n^{(k)} }\left( \| \balpha \|_2  +   \balpha^\top \bbeta\right)~ \qquad \mathbf{(P2_k)}.
\end{align*}
Similarly to the proof of Theorem~\ref{thm:Main} and Corollary~\ref{cor:Main}, it can be shown that there exists $\bar{k}\leq k$ such that the optimal solution of $\mathbf{(P2_k)}$ is of the form $(\alpha_1, \ldots,\alpha_{\bar{k}},0\ldots,0)$, where $\alpha_i>0, \; \forall i\leq \bar{k}$.
Moreover, given $\bar{k}$ it can be shown that the value of $\mathbf{(P2_k)}$ at the optimum equals $\lambda$, where
\begin{align*}
\lambda = \frac{1}{\bar{k} }\left( \sum_{i=1}^{\bar{k}} \beta_i  +  \sqrt{ \bar{k}  + \left( \sum_{i=1}^{\bar{k}} \beta_i \right)^2 - \bar{k} \sum_{i=1}^{\bar{k}} \beta_i^2 } \right) ~,
\end{align*}
which is of the form calculated in Algorithm~\ref{algorithm:KstarNN}. The above implies that showing $\bar{k}=k$ concludes the proof.
Now, assume by contradiction that $\bar{k}<k$, then it is immediate to show 
that the resulting solution of $\mathbf{(P2_k)}$ also satisfies the KKT conditions of the original problem $\mathbf{(P2)}$, and is therefore an optimal solution to $\mathbf{(P2)}$. However, this stands in contradiction to the fact that $\bar{k}< k^*$, and thus it must hold that  $\bar{k}=k$, which establishes the lemma.
\end{proof}



\end{document}